\documentclass{article}

\PassOptionsToPackage{numbers, compress}{natbib}

\usepackage[preprint]{neurips_2026}
\usepackage[moderate]{savetrees}
\usepackage[utf8]{inputenc} 
\usepackage[T1]{fontenc}    
\usepackage{hyperref}       
\usepackage{url}            
\usepackage{booktabs}       
\usepackage{amsmath}        
\usepackage{amsfonts}       
\usepackage{amsthm}         
\usepackage{enumerate}

\newtheorem{lemma}{Lemma}
\newtheorem{proposition}{Proposition}

\usepackage{graphicx}       
\usepackage{nicefrac}       
\usepackage{microtype}      
\usepackage{xcolor}         
\usepackage{todonotes}      

\title{Dive into Waves: Morlet Spectral Transformer for \\ Cross-Subject Emotion Decoding from EEG}

\author{%
Jiaxin~Qing\\
University of California, Berkeley\\
\texttt{jxqing@berkeley.edu} \\
\And
Lexin~Li\\
University of California, Berkeley\\
\texttt{lexinli@berkeley.edu} \\
}

\begin{document}

\maketitle

\begin{abstract}
We study cross-subject emotion recognition from EEG, a practically important yet challenging problem in brain-computer interfaces. Unlike tasks with clear waveform signatures, emotion-related EEG signals are primarily encoded in spectral power and are weak, noisy, and highly variable across subjects. Existing approaches rely either on large pretrained EEG foundation models, which require massive data yet still struggle with cross-subject variability, or frequency-domain encoders, which better reflect spectral structure but suffer from mismatched representations, drift-dominated tokenization, and lack of band-specific spatial modeling. In this article, we propose the Morlet Spectral Transformer (MST), built around three key components and integrated with a spatiotemporal Transformer backbone. First, Morlet wavelet tokenization provides a time-frequency representation that matches the multi-scale structure of brain rhythms, and extends classical differential entropy features to a form suitable for Transformers. Second, long-context baseline removal acts as a simple temporal normalization that removes subject-specific drift and redundancy across nearby windows. Third, frequency-specific spatial projection learns a separate channel mixer for each frequency band, capturing interpretable band-specific patterns and reducing cross-channel mixing. We show that, even without pretraining, MST consistently outperforms both large pretrained EEG foundation models and frequency-based methods across all SEED-family datasets. These results suggest that careful representation design can yield an accurate, cost-effective, and interpretable alternative to large-scale pretraining.
Codes are available at \url{https://github.com/jqin4749/Morlet}.
\end{abstract}

\section{Introduction}

Cross-subject emotion recognition from electroencephalography (EEG), which aims to decode a subject's emotional state from brain EEG signals recorded across different individuals, is a central problem in brain-computer interfaces \cite{gkintoni2025neural}. Despite recent advances in deep learning, the problem remains challenging. 

EEG offers a non-invasive and temporally precise measure of neural activity, but the EEG signals are known to have a low signal-to-noise ratio. Meanwhile, emotion-related signals are subtle and hard to extract. Unlike seizure detection or sleep staging, emotional states do not always produce clear patterns in the raw waveform. Instead, they mainly modulate the power distribution across frequency bands such as frontal alpha asymmetry, frontal theta, and beta/gamma activity \cite{klimesch1999eeg}, so the informative content is primarily spectral and buried under physiological and environmental noise. The problem is further complicated by substantial variability across subjects. While within-subject decoding has achieved strong performance on standard benchmarks \cite{zheng2015investigating, koelstra2012deap}, performance in cross-subject settings remains much lower. EEG signals vary significantly across individuals due to differences in physiology, recording conditions, and baseline neural activity. In addition, adjacent EEG segments are highly correlated, leading to temporal redundancy. This leads models to learn subject-specific patterns rather than emotion-related signals, further hindering generalization to new subjects.

Existing approaches mainly follow two lines. One line of work builds on EEG foundation models, including LaBraM \cite{jiang2024labram}, EEGPT \cite{wang2024eegpt}, CBraMod \cite{wang2025cbramod}, and CSBrain \cite{zhou2025csbrain}. These methods use large pretrained Transformer-based encoders. While promising, they rely on large-scale pretraining with massive, thousands of hours of unlabeled EEG data, and still struggle to fully address cross-subject variability \cite{liu2025eegfm}. Another line of work focuses on frequency-domain representations, motivated by the spectral nature of emotion-related signals \cite{yang2023biot, wang2023brainbert, liu2025tfm, zhang2025codebrain}. However, current frequency-based EEG encoders have several key limitations. Fixed-window Fourier transforms do not match the log-scale $1/f$ structure of cortical oscillations with the frequency $f$. Per-segment tokenization is dominated by subject-specific drift, and strong temporal autocorrelation between adjacent segments further encourages the model to memorize this drift rather than emotion-relevant signals. Finally, using a single spatial mixer across all frequencies ignores the strongly band-specific topographies of cortical rhythms.

These observations suggest that a key bottleneck lies not only in model scale, but also in how EEG signals are represented. The challenge is to design a time-frequency representation that handles frequency mismatch, subject-specific drift, and band-specific topography, while reducing subject-to-subject variation and preserving emotion-relevant spectral structure \cite{cohen2014analyzing, duan2013differential}.

In this article, we propose the Morlet Spectral Transformer (MST), built around three key components and integrated with a spatiotemporal Transformer backbone. First, Morlet wavelet tokenization converts raw EEG into a multi-resolution time-frequency representation, with high frequency resolution at low bands and high temporal resolution at high bands, which matches the log-scale organization of neural rhythms. The use of a Morlet wavelet is motivated by its wide adoption in neuroscience, while our tokenization generalizes classical differential entropy features to a form suitable for Transformers, as we formally show later. Second, long-context baseline removal performs a simple temporal normalization by subtracting a local baseline estimated from neighboring segments. It helps suppress subject-specific drift and temporal redundancy, directly addressing key sources of cross-subject domain shift, while retaining both the original and residual views. Third, frequency-specific spatial projection learns band-dependent spatial filters to disentangle heterogeneous scalp topographies and reduce cross-channel mixing. We integrate these components within a Transformer architecture, and show that it consistently outperforms state-of-the-art baselines empirically. Our ablation studies further demonstrate that each component is essential for decoding accuracy, and the learned filters recover canonical topographies such as posterior alpha and frontal-midline theta without any supervision on electrode positions, providing a level of interpretability that is rare in existing EEG Transformers.

Taken together, MST instantiates a representation-driven approach to cross-subject EEG emotion decoding. Our contributions include:
\begin{itemize}
\setlength{\itemsep}{3pt}
\item We introduce a neuroscience-inspired representation learning framework that explicitly targets cross-subject variability in EEG. By combining Morlet wavelet tokenization, long-context baseline removal, and frequency-specific spatial projection, our method performs structured variance reduction across frequency, time, and space, yielding a low-variance representation that better isolates emotion-relevant signals. Unlike purely data-driven embedding learning, this design is grounded in well-established principles of cortical oscillations and spectral organization, enabling more stable and generalizable representations across subjects.

\item We develop a cost-effective and interpretable Transformer-based approach. It incorporates domain-informed inductive biases from EEG neuroscience, and does not require large-scale pretraining. In contrast to black-box Transformer or foundation models, our design also enables direct interpretability at multiple levels, including representations that disentangle canonical EEG frequency bands, and spatial filters that recover known neurophysiological topographies. This provides scientifically meaningful insights into neural mechanisms, which are largely unavailable in standard pretrained black-box models.

\item We demonstrate that, without any pretraining, our approach consistently outperforms both large pretrained EEG foundation models and frequency-based alternatives across all SEED-family datasets. For instance, on the SEED benchmark, we achieve 66.5\% leave-one-subject-out accuracy, compared to 62.4\% from the best pretrained EEG foundation model. This highlights that representation design, not just model or data scale, is a key factor in cross-subject generalization.
\end{itemize}

\section{Related Work}

\paragraph{EEG-based emotion recognition.}
Differential entropy (DE) computed over a small number of frequency bands \cite{duan2013differential} is a widely used feature for EEG-based emotion recognition. With DE features and simple classifiers, within-subject decoding can achieve 80-90\% accuracy on standard benchmark datasets. However, cross-subject decoding remains much more challenging. Although domain adaptation methods \cite{li2020multisource, li2022domain} can reach over 85\% accuracy, they typically rely on target-session calibration or relaxed evaluation protocols, and the accuracy under the strict leave-one-subject-out setting remains substantially lower, around or below 60\%.

\paragraph{EEG foundation models.}
Recent EEG foundation models pretrain large Transformer-based encoders on thousands of hours of unlabeled EEG data. For instance, LaBraM~\cite{jiang2024labram} employs vector-quantized neural spectrum prediction, EEGPT~\cite{wang2024eegpt} adopts mask-based self-supervised learning, CBraMod~\cite{wang2025cbramod} uses a criss-cross Transformer with masked patch reconstruction, and CSBrain~\cite{zhou2025csbrain} models cross-scale spatiotemporal dependencies. However, a recent benchmark across 12 such models and 13 datasets~\cite{liu2025eegfm} shows that models trained from scratch remain competitive, and that increasing pretraining scale does not consistently improve cross-subject generalization. In contrast, MST does not rely on pretraining. While it shares standard architectural components such as per-channel encoding and self-attention, it instead builds on the Morlet wavelet tokenization and long-context baseline removal steps, which directly target cross-subject variability and temporal redundancy.

\paragraph{Time-frequency tokenization and frequency-specific spatial projection.}
Morlet wavelets are widely used in cognitive neuroscience for time-frequency analysis because their frequency-dependent window naturally matches the $1/f$ structure of cortical oscillations \cite{tallonbaudry1997oscillatory, cohen2014analyzing}. Recent EEG encoders either use fixed Fourier or STFT front-ends \cite{yang2023biot, wang2023brainbert}, learn vector-quantized time-frequency codebooks \cite{jiang2024labram, liu2025tfm, zhang2025codebrain}, or replace explicit spectral analysis with learnable convolutions \cite{song2022conformer}. Frequency-dependent spatial filtering also has a long history. FBCSP learns spatial filters separately over filter-bank bands \cite{ang2008filter}, while EEGNet and FBCNet learn spatial filters after temporal or filter-bank decomposition \cite{lawhern2018eegnet, mane2021fbcnet}. These methods show that spatial patterns should depend on frequency, but they are usually built around CSP or CNN pipelines and fixed filter-bank views. MST instead uses a fixed log-spaced Morlet bank, removes local temporal baselines at the spectrogram level, and learns an explicit per-frequency channel projection before the Transformer, so band-dependent topographies are modeled directly on dense time-frequency tokens without codebook pretraining.

\section{Method}

We present the Morlet Spectral Transformer, a pipeline that converts raw multi-channel EEG into time-frequency tokens and classifies emotional states with a Transformer. It consists of three key components, a Morlet wavelet tokenization, a long-context baseline removal, and a frequency-specific spatial projection, all embedded within a Transformer architecture. Figure~\ref{fig:architecture} illustrates the full architecture.

\begin{figure}[t!]
\centering
\includegraphics[width=1.0\textwidth,height=2.7in]{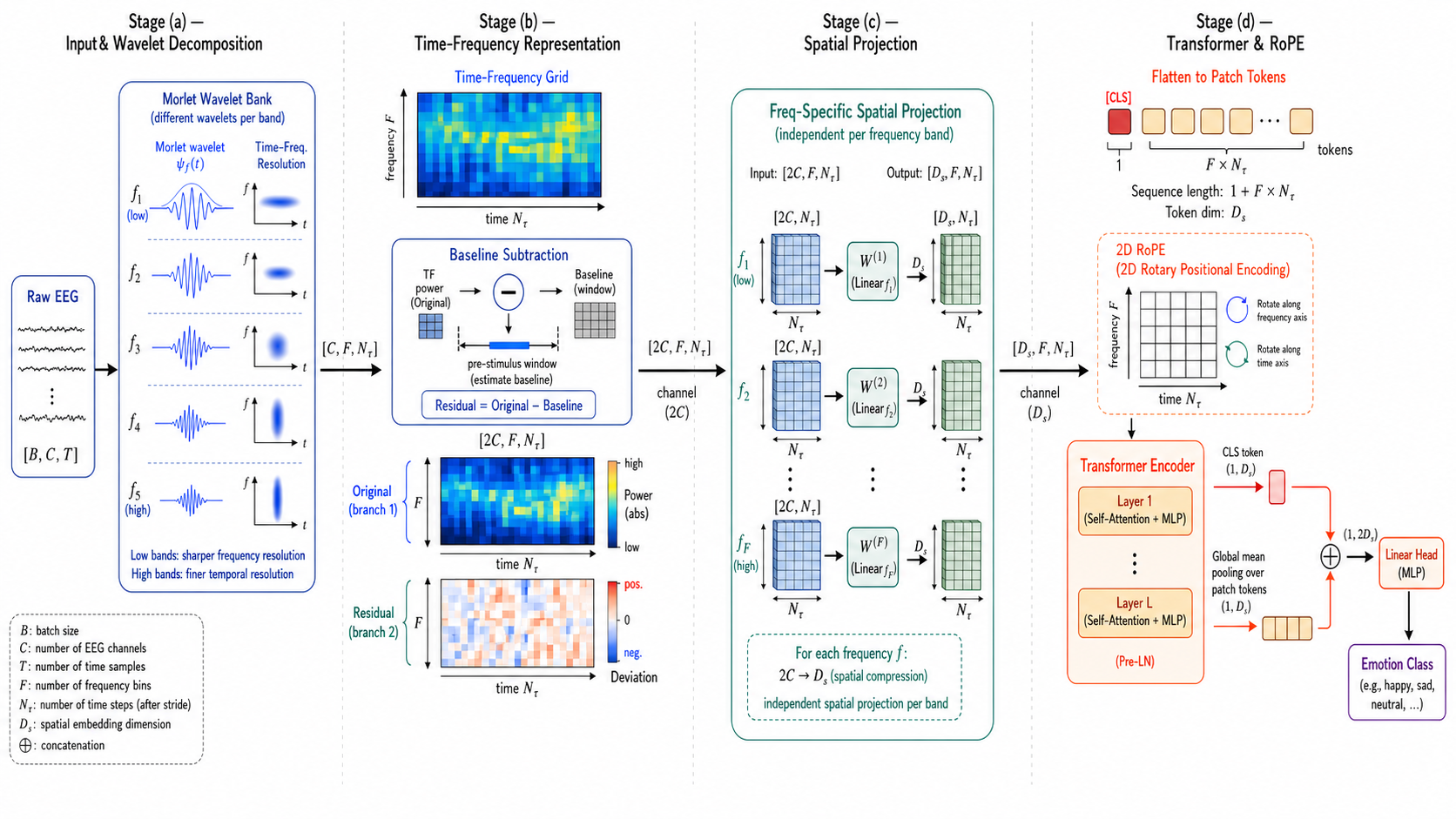}
\caption{Overview of MST. Raw EEG is decomposed into a time-frequency grid by fixed Morlet wavelets. Long-context baseline removal produces a dual-branch representation. Frequency-specific spatial projections collapse channels, and the resulting tokens are processed by a Transformer with 2D RoPE. A dual-pooling head concatenates the CLS token and mean-pooled patch representations for classification.}
\label{fig:architecture}
\end{figure}

\subsection{Morlet Wavelet Tokenization}
\label{sec:wavelet}

Unlike seizure detection or sleep staging, where characteristic waveform morphologies, such as epileptic spikes, sleep spindles, K-complexes, drive raw signal models, emotion recognition lacks such morphological signatures. Instead, affective states modulate the power distribution across frequency bands, such as frontal alpha asymmetry, frontal theta power, and beta/gamma activity \cite{klimesch1999eeg}. Consequently, the informative signal is primarily spectral, and a time-frequency decomposition offers a natural representation.

We adopt the complex Morlet wavelet, $\psi_f(t) = \exp(-t^2/(2\sigma_f^2))\exp(j\, 2\pi f\, t)$, a complex sinusoid with center frequency $f$ windowed by a Gaussian envelope of width $\sigma_f \propto 1/f$ \cite{tallonbaudry1997oscillatory, cohen2014analyzing}.  Because the temporal window scales inversely with frequency, low-frequency wavelets span long windows while high-frequency wavelets have short windows, yielding high frequency resolution for slow rhythms and high temporal resolution for fast bursts. This provides an adaptive trade-off that fixed-window short-term Fourier transforms cannot provide, while also aligning with the $1/f$ structure of cortical oscillations. We place $F = 20$ center frequencies on a logarithmic grid from $2$ to $45$\,Hz, mirroring the approximate log-spacing of the canonical delta, theta, alpha, beta, and gamma bands. We then convolve each channel with the bank, take the complex amplitude, adaptively pool into $N_\tau = 16$ temporal bins, and apply a log transform, which produces a tensor $\mathbf{S} \in \mathbb{R}^{C \times F \times N_\tau}$ with $C = 62$. The exact grid formula, kernel normalization, implementation, and the visualization of the wavelet bank and resulting tokenization are given in Appendix~\ref{app:morlet-bank}. 

This grid behaves as a continuous, multi-resolution generalization of the differential entropy (DE) feature that is widely used in EEG emotion recognition \cite{zheng2015investigating, duan2013differential}. The next proposition establishes the formal equivalence between Morlet tokenization and differential entropy, while the detailed assumptions, constants, and the proof are given in Appendix~\ref{app:de-equivalence}. 

\begin{proposition}[Morlet tokens generalize differential entropy, informal]
\label{thm:morlet-de-informal}
For a zero-mean stationary Gaussian EEG signal, and under mild narrow-band and local-ergodicity assumptions on the Morlet response, each Morlet token $S_{c,f,\tau}$ equals the differential entropy of the band-limited signal in a passband centered at frequency $f$, up to an additive constant that does not depend on the channel, the frequency, the time bin, the signal, or the subject.
\end{proposition}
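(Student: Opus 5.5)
The plan is to make the construction of $S_{c,f,\tau}$ explicit and then reduce it, step by step, to the Gaussian entropy formula. The token is obtained in three steps. First, convolve channel $c$ with $\psi_f$ to get the complex response $W_c(f,t) = (x_c * \psi_f)(t)$. Second, pool the amplitude $|W_c(f,t)|$ over the time bin $\tau$; I will work with the squared amplitude, i.e.\ power, and note that pooling amplitude instead changes only a constant. Third, take the log.

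The key fact is that a zero-mean stationary Gaussian process filtered by a linear time-invariant filter stays zero-mean stationary Gaussian. Hence the band-limited signal $y_{c,f}(t) = \mathrm{Re}\,W_c(f,t)$, and in fact its analytic counterpart $W_c(f,t)$, are Gaussian. Their variance is
\[
\sigma^2_{c,f} = \int P_c(\nu)\,|\hat\psi_f(\nu)|^2\,d\nu ,
\]
where $P_c$ is the power spectral density of channel $c$. The differential entropy of a Gaussian with this variance is $h = \tfrac12\log(2\pi e\,\sigma^2_{c,f})$. So DE in the passband centered at $f$, namely the Gaussian passband $\hat\psi_f$, is an affine function of $\log\sigma^2_{c,f}$ with slope $1/2$ and a universal intercept.

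It therefore remains to show that $\log$ of the pooled Morlet power equals $\log\sigma^2_{c,f}$ up to a universal constant. I will use the two stated assumptions as follows.

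\emph{Local ergodicity.} Within a time bin, the time average $\frac{1}{|\tau|}\int_\tau |W_c(f,t)|^2\,dt$ is replaced by its expectation $\mathbb{E}|W_c(f,t)|^2$.

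\emph{Narrow band and analyticity.} The Morlet filter has negligible negative-frequency support, so $\mathbb{E}|W|^2 = 2\,\mathbb{E}(\mathrm{Re}\,W)^2 = 2\sigma^2_{c,f}$. This gives a factor of 2, which is independent of everything.

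If the implementation pools amplitude rather than power, I will use the fact that $|W|$ is Rayleigh distributed for a circular complex Gaussian. Then $\mathbb{E}|W| = \sqrt{\pi/2}\,\sigma_{c,f}$, and the log of the pooled amplitude is $\tfrac12\log\sigma^2_{c,f}$ plus a constant. This matches the $1/2$ slope exactly, so no rescaling is needed.

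The kernel normalization in Appendix~\ref{app:morlet-bank} matters here. It must be chosen so that the passband gain, e.g.\ $\hat\psi_f$ peak-normalized or energy-normalized, does not contribute an $f$-dependent factor. Otherwise a $\log\|\psi_f\|$ term would enter the constant. I will handle this by defining the reference band-limited signal through the normalized kernel itself, so this term is absorbed.

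Collecting terms, $S_{c,f,\tau} = h(y_{c,f}) + \kappa$. Here $\kappa$ consists of $-\tfrac12\log(2\pi e)$, the $\log 2$ or Rayleigh constant $\log\sqrt{\pi/2}$, and any global scaling. It depends on none of $c$, $f$, $\tau$, the signal, or the subject.

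The main obstacle is the middle step. Making "log of the time-average of the power" coincide with "log of the expected power" requires exact local ergodicity; otherwise Jensen's inequality leaves a small signal-dependent gap. The "narrow-band" assumption is needed to ensure that $\mathrm{Re}\,W$ and $\mathrm{Im}\,W$ are uncorrelated with equal variance, which is what gives the circularity used above. I would state both as explicit assumptions, with the approximation error controlled by the bin length relative to the correlation time $\sim\sigma_f$, and prove only the exact identity under them.
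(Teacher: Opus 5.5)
Your proposal is essentially the paper's proof: Gaussianity is preserved under linear filtering, and narrow-band analyticity gives circularity and hence a Rayleigh envelope with $\mathbb{E}|W| = \sqrt{\pi/2}\,\sigma$. Local ergodicity then replaces the bin average by this expectation, and the Gaussian entropy formula plus collected constants finishes it, with the kernel-gain issue absorbed exactly as the paper does by taking the Morlet-weighted band power as the reference. Only the amplitude (Rayleigh) branch delivers the statement as written, because pooling power gives $\log\overline{|W|^2} = 2h + \text{const}$, a change of slope rather than of additive constant, contrary to your opening remark; your factor-of-2 mismatch between $\int P_c|\hat\psi_f|^2$ and $\mathrm{Var}(\mathrm{Re}\,W)$ is a harmless constant of the same kind the paper itself glosses over.
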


In particular, restricting $F$ to the five canonical delta, theta, alpha, beta, and gamma bands and setting $N_\tau = 1$ recovers the standard DE feature used in prior emotion work \cite{duan2013differential, zheng2015investigating}. The $F \times N_\tau = 320$ grid we employ provides a strict multi-resolution refinement, more densely sampling the $1/f$ organization of cortical oscillations in frequency \cite{cohen2014analyzing}, while capturing within-segment temporal dynamics that a single DE value would discard.

\subsection{Long-Context Baseline Removal}
\label{sec:baseline}

Raw EEG log-power spectra are dominated by emotion-irrelevant components that drive inter-subject variance, and adjacent two-second segments share most of their spectral content because of high temporal autocorrelation. A model trained on such near-identical spectrograms tends to memorize the shared low-frequency structure rather than the smaller emotion-relevant deviations, leading to severe cross-subject overfitting. Figure~\ref{fig:redundancy} shows a visualization of temporal redundancy. 

We address these issues by removing a long-context baseline. For each segment, we average the log-amplitude spectrograms of the $K$ preceding and $K$ subsequent segments within the same trial, i.e., 
\begin{equation*} 
\bar{S}_{c,f,\tau} = \frac{1}{2K} \sum_{i=1}^{2K} S^{(i)}_{c,f,\tau},
\end{equation*}
with $K = 5$ giving approximately 20 seconds of context. We then form the residual,
\begin{equation*} \label{eq:residual}
S^{\mathrm{res}}_{c,f,\tau} = S_{c,f,\tau} - \bar{S}_{c,f,\tau}.
\end{equation*}
The residual removes slow drifts, subject-specific offsets, and shared spectral structure across neighboring segments, isolating the spectral deviations that distinguish the current segment from its local context. To preserve power removed by the residual, we concatenate the two branches along the channel dimension as $\tilde{\mathbf{S}} = [\mathbf{S},\; \mathbf{S}^{\mathrm{res}}] \in \mathbb{R}^{2C \times F \times N_\tau}$, providing a drift-corrected representation alongside richer but noisier spectral detail.

\begin{figure}[b!]
\centering
\includegraphics[width=\textwidth,height=1.05in]{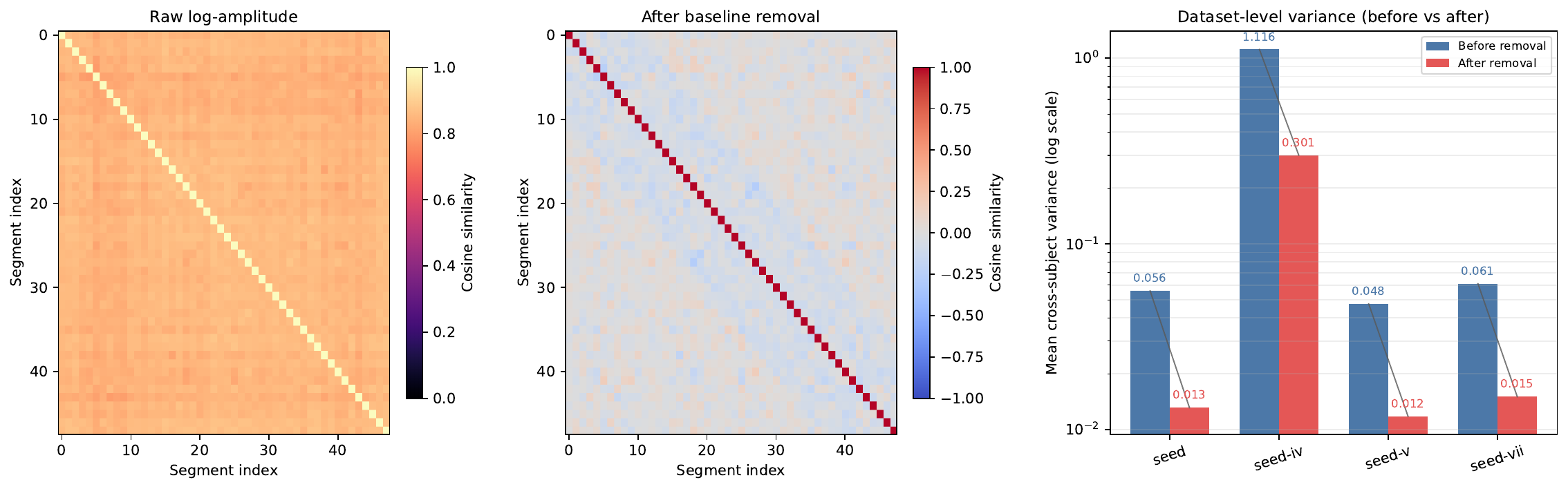}
\caption{Temporal redundancy. The left panel shows pairwise cosine similarity of raw log-amplitude spectrograms for consecutive segments within one trial. Adjacent segments are nearly identical, leading to training data redundancy. The middle panel shows that after long-context baseline removal, the similarity drops substantially, suppressing slow drift and subject-specific offsets. The right panel shows that variance across subjects after baseline removal drops.}
\label{fig:redundancy}
\end{figure}

Baseline subtraction has clear precedent in the EEG literature. The original SEED study~\cite{zheng2015investigating} subtracts the DE feature computed from a neutral-emotion trial. In contrast, our use of local temporal context removes the need for a dedicated neutral recording, while dual-branch concatenation preserves the absolute spectral information that pure subtraction would otherwise discard.

\subsection{Frequency-Specific Spatial Projection}
\label{sec:spatial}

EEG topography is strongly frequency-dependent. Alpha rhythms (8-13\,Hz) are concentrated over posterior and occipital regions, while frontal theta (4-8\,Hz) follows a midline distribution. Beta and gamma activities exhibit distinct spatial patterns \cite{klimesch1999eeg}. A single spatial projection shared across all frequencies would conflate these topographies and obscure frequency-specific spatial structure.

We therefore learn a separate projection matrix for each frequency band. Let $\mathbf{W}^{(f)} \in \mathbb{R}^{2C \times D_s}$ be the projection for frequency $f$, with $D_s = 16$. We compute the spatial projection as
\begin{equation*} 
Z_{d,f,\tau} = \sum_{c=1}^{2C} W^{(f)}_{c,d}\, \tilde{S}_{c,f,\tau}.
\end{equation*}
The output is $\mathbf{Z} \in \mathbb{R}^{D_s \times F \times N_\tau}$. Figure~\ref{fig:spatial_filters} shows the learned per-frequency spatial filters rendered as scalp topographies, demonstrating that they recover canonical frequency-dependent topographic patterns.

\begin{figure}[t!]
\centering
\includegraphics[width=.9\textwidth]{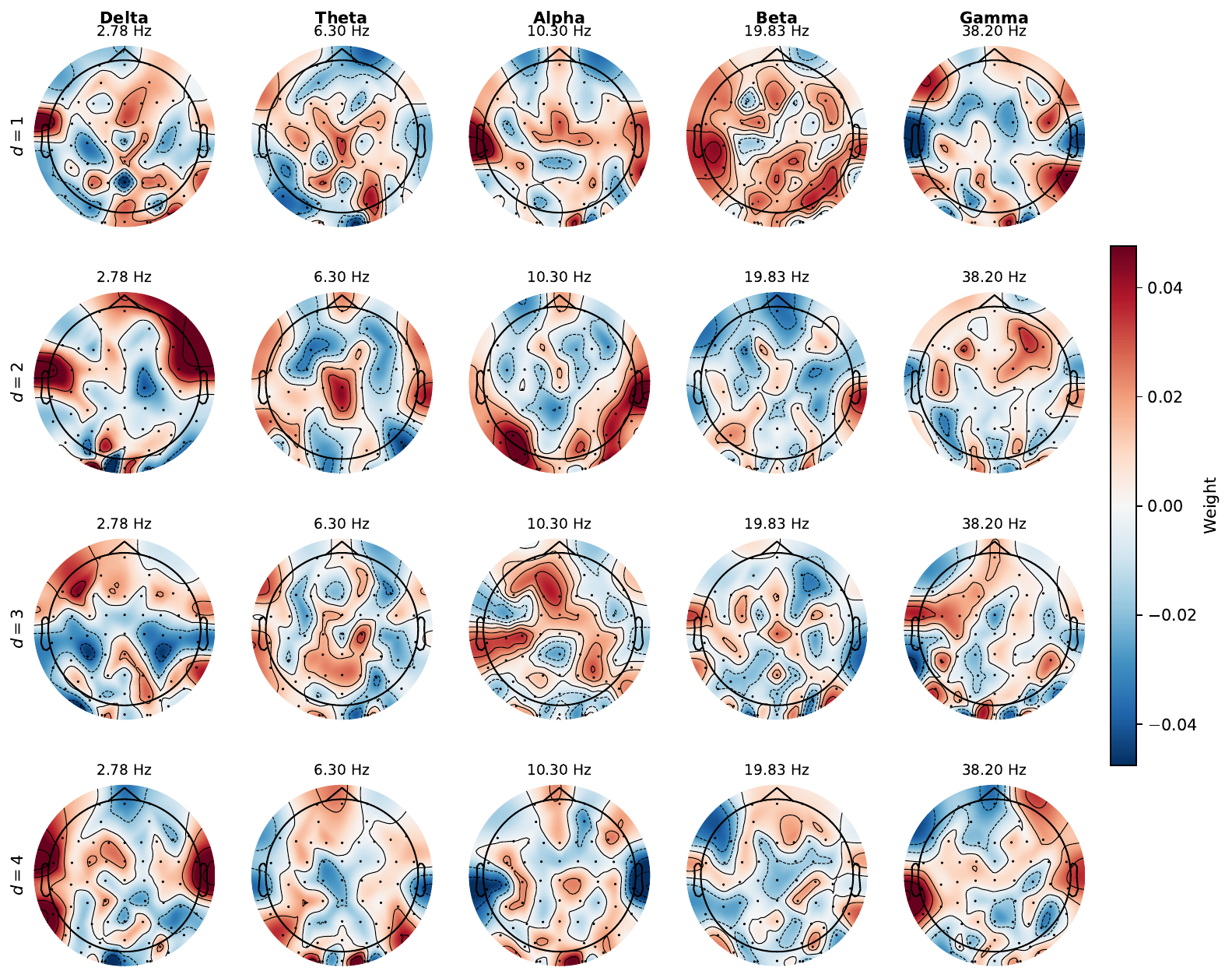}
\caption{Learned frequency-specific spatial projections. Each panel shows one spatial component $W^{(f)}_{:,d}$ rendered on the 10-20 scalp layout. The filters automatically specialize by frequency, with posterior alpha, frontal midline theta, and distributed beta and gamma patterns, demonstrating that the per-frequency design recovers canonical EEG topographies without supervision on channel positions.}
\label{fig:spatial_filters}
\end{figure}

\subsection{Transformer with 2D Rotary Position Embedding and Dual-Pooling Classification}
\label{sec:transformer}

We integrate the above three components within a Transformer architecture, including a Transformer encoder with two-dimensional rotary position embedding (2D RoPE) and a dual-pooling classification head. 

We project each $(f, \tau)$ bin of the spatially projected grid linearly into a token of dimension $D = 256$,
\begin{equation*} 
\mathbf{h}^{(0)}_{f \cdot N_\tau + \tau} = \mathrm{Linear}(\mathbf{Z}_{:,f,\tau}) \in \mathbb{R}^{D},
\end{equation*}
yielding $F \times N_\tau = 320$ patch tokens, to which a learnable CLS token is prepended. To preserve the inherent two-dimensional structure of the token grid, we adopt the two-dimensional rotary position embedding (2D RoPE) \cite{su2024roformer}. We stack pre-LayerNorm Transformer blocks \cite{vaswani2017attention} to form the encoder. 

After the Transformer layers, we extract two complementary representations. One is the CLS token $\mathbf{z}_{\mathrm{CLS}} \in \mathbb{R}^D$, which captures globally aggregated information through learned self-attention weights, and selectively attends to the most informative tokens. The other is the mean of all patch tokens $\bar{\mathbf{z}}_{\mathrm{patch}} = (FN_\tau)^{-1} \sum_{i} \mathbf{h}^{(L)}_i \in \mathbb{R}^D$, which provides a uniform, unbiased summary of the full time-frequency representation. Concatenating both yields a $2D$-dimensional vector that combines selective and distributed information. We then construct the classification head as 
\begin{equation*} 
\hat{\mathbf{y}} = \mathbf{W}_{\mathrm{cls}}\, [\mathbf{z}_{\mathrm{CLS}},\; \bar{\mathbf{z}}_{\mathrm{patch}}] + \mathbf{b}_{\mathrm{cls}},
\end{equation*}
where $\mathbf{W}_{\mathrm{cls}} \in \mathbb{R}^{N_{\mathrm{class}} \times 2D}$ and $N_{\mathrm{class}}$ is the number of emotion categories of the target dataset.

\subsection{Data Augmentation}
\label{sec:augmentation}

Training a Transformer on small EEG datasets is prone to overfitting, particularly due to strong temporal redundancy and limited inter-trial variability. Therefore, beyond architectural design, we further augment the training data to expose the model to a broader range of plausible signal variations while preserving underlying physiological structure. We introduce four augmentation strategies, all grounded in the physical properties of EEG signals, as illustrated in Figure~\ref{fig:augmentations}. Specifically, phase perturbation injects Gaussian noise into the Fourier phase of each frequency bin, with standard deviation $\sigma_\phi \sim \mathrm{Uniform}(\pi/6, \pi/2)$ shared across channels, preserving the power spectrum while disrupting phase-locked artifacts. Band-specific noise injection adds Gaussian noise to one to three randomly selected frequency bands, with amplitude scaled to local signal power via $\alpha \sim \mathrm{Uniform}(0.1, 0.3)$, encouraging robustness to band-limited variability. Channel dropout randomly masks 10-30\% of channels and replaces them in the wavelet domain with a learnable mask token $\mathbf{m} \in \mathbb{R}^F$, promoting robustness to missing or corrupted sensors. Finally, wavelet time roll circularly shifts the log-amplitude tensor along the time axis by a random offset, leveraging the approximate stationarity of emotional states within a two-second segment to remove residual temporal ordering bias. 

\begin{figure}[t!]
\centering
\includegraphics[width=\textwidth,height=2in]{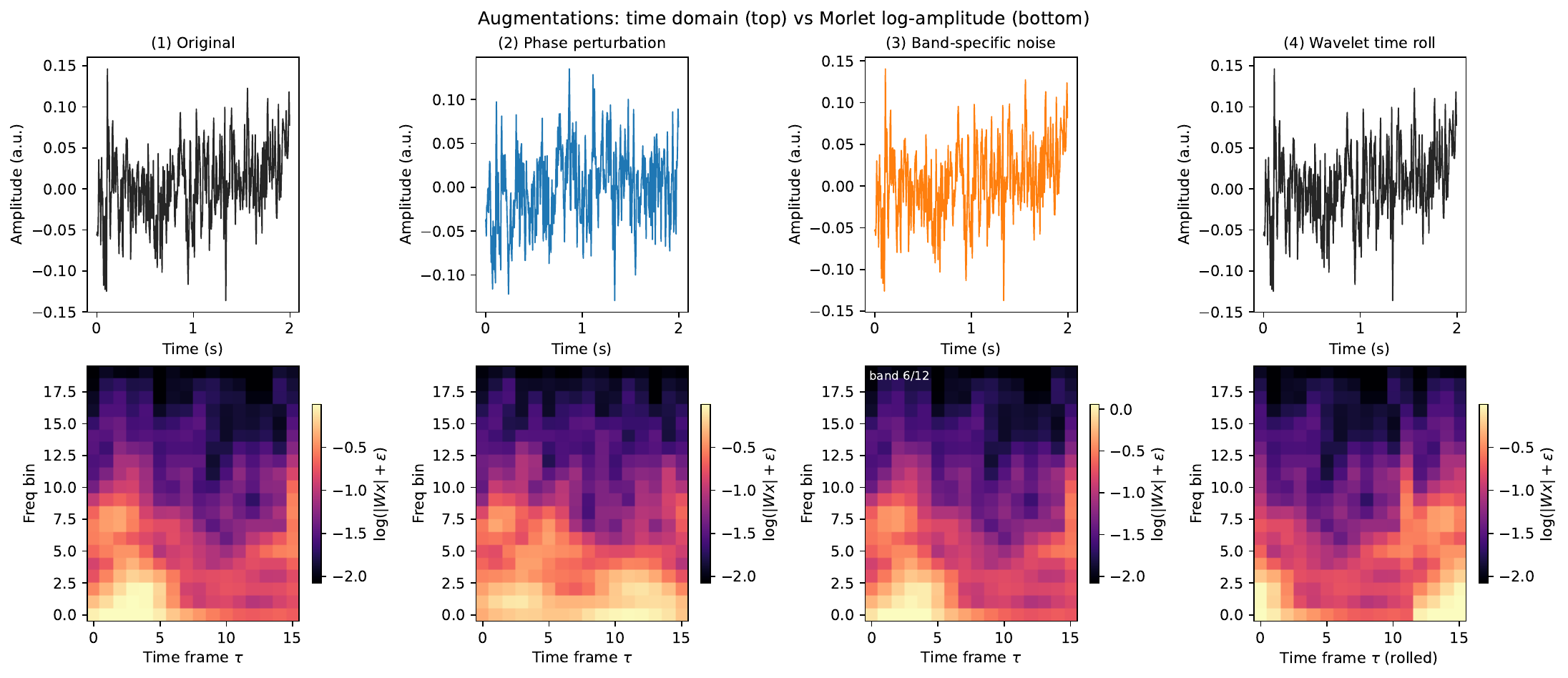}
\caption{Effects of four data augmentation strategies on a representative sample.}
\label{fig:augmentations}
\end{figure}

\section{Experiments}
\label{sec:experiments}

\subsection{Datasets and Experiment Setup}
\label{sec:setup}

We evaluate the proposed Morlet Spectral Transformer on four SEED-family datasets, and compare with four state-of-the-art EEG foundation models and two frequency-based methods. 

The SEED dataset family \cite{zheng2015investigating, zheng2018emotionmeter, liu2021comparing, jiang2024seedvii} is a standard benchmark for EEG-based emotion recognition. We evaluate our method on four SEED-family datasets, in which subjects watch emotion-eliciting film clips while 62-channel EEG is recorded. These datasets share the same recording paradigm but differ in the number of emotion classes, with 3 classes for SEED, 4 for SEED-IV, 5 for SEED-V, and 7 for SEED-VII, as well as in the number of subjects and trials. More details on the datasets, as well as data preprocessing, including filtering, downsampling, window-level normalization, and segmentation, are provided in Appendix~\ref{app:datasets}.

For evaluation, we adopt a strict leave-one-subject-out (LOSO) cross-validation protocol without target-subject calibration or fine-tuning, and report the mean and standard deviation of classification accuracy, Cohen's $\kappa$, and weighted F1 across folds. In our implementations, we use $L = 12$ Transformer layers and hidden dimension $D = 256$, with $F = 20$, $N_\tau = 16$, $n_{\mathrm{cyc}} = 5$, $D_s = 16$, and $K = 5$, corresponding to approximately 20\,s of temporal context. We compare against four EEG foundation models, LaBraM~\cite{jiang2024labram}, EEGPT~\cite{wang2024eegpt}, CBraMod~\cite{wang2025cbramod}, and CSBrain~\cite{zhou2025csbrain}, each evaluated in two variants. A pretrained variant fine-tunes the released checkpoint on the source subjects, while a from-scratch variant trains the same architecture from random initialization. This isolates the contribution of pretraining from that of architecture, enabling a direct comparison with our no-pretraining method. We also compare with two time-frequency specialist baselines that also tokenize EEG in the spectral domain, BIOT~\cite{yang2023biot} and TFM-Tokenizer~\cite{liu2025tfm}. All models are trained until either convergence or 50000 steps are reached, and the checkpoint with the best validation balanced accuracy on the source subjects is reported, with no information from the held-out subject used for selection. More details on hyperparameters, training, baseline fine-tuning, and hardware are given in Appendix~\ref{app:datasets}.

\subsection{Main Results}
\label{sec:main-results}

Table~\ref{tab:main} reports the classification accuracy across all four SEED-family datasets. The full Cohen's $\kappa$ and weighted F1 breakdown for every dataset is given in Table~\ref{tab:full-metrics} in Appendix~\ref{app:full-metrics}. 

\paragraph{MST sets the state of the art on SEED without pretraining.} 
MST achieves 66.5\% classification accuracy on SEED under strict LOSO evaluation, without any calibration on the target subject. In contrast, LaBraM reaches 62.4\% after pretraining on thousands of hours of unlabeled EEG data, and CSBrain follows at 61.5\%, trailing MST by 4.1 and 5.0 percentage points, respectively. Without pretraining, LaBraM from scratch attains 59.8\%, trailing our method by nearly 7 percentage points. BIOT and TFM-Tokenizer, two time-frequency-focused baselines, achieve 59.0\% and 53.3\%, respectively, indicating that operating in the spectral domain alone is not sufficient. Our inductive biases, including long-context baseline removal, frequency-specific spatial projection, and the fixed Morlet wavelet bank, account for the remaining gains by improving cross-subject generalization. We also note that, after accounting for standard errors across LOSO folds, the performance margin of MST over LaBraM on SEED is not statistically significant. However, this result should not be interpreted in isolation. Improvements of a few percentage points under strict cross-subject evaluation are generally considered meaningful in the EEG-to-emotion literature, particularly when achieved without target-subject calibration or external pretraining. 

\paragraph{The advantage widens as the number of emotion classes grows.} 
MST maintains a consistent performance advantage as the task becomes more fine-grained. As the number of emotion classes increases from three (SEED) to seven (SEED-VII), all methods degrade in accuracy, yet the relative advantage of MST remains and becomes more pronounced. Against the best pretrained baseline, MST leads by 4.2, 3.6, and 3.2 percentage points on SEED-IV (four classes), SEED-V (five classes), and SEED-VII (seven classes), respectively. Against the best from-scratch baseline, which is BIOT on all three datasets, the margins increase to 4.9, 3.7, and 3.3 percentage points. These fine-grained settings place greater demands on representation quality, under which wavelet-domain tokenization remains the dominant factor for cross-subject generalization. In summary, the baseline methods remain close to one another and well below MST, which separates from the pack across all four datasets.

\begin{table}[t!]
\centering
\caption{Cross-subject leave-one-subject-out performance on the four SEED-family datasets. All methods are run under the same no-calibration protocol. The mean $\pm$ standard deviation of classification accuracy (\%) across folds is shown. The best performance per column is shown in bold. TFM-Tokenizer uses the authors' pretrained time-frequency tokenizer with a fine-tuned downstream classifier and has no from-scratch variant.}
\label{tab:main}
\begin{tabular}{lcccc} \toprule
    Method & SEED & SEED-IV & SEED-V & SEED-VII \\
    \midrule
    \multicolumn{5}{l}{\textit{Foundation models with pretrained tokenizer + fine-tuning}} \\
    LaBraM~\cite{jiang2024labram}             & 62.4 $\pm$ 7.1 & 35.9 $\pm$ 3.6 & 32.5 $\pm$ 4.9 & 24.7 $\pm$ 3.6 \\
    EEGPT~\cite{wang2024eegpt}                & 56.2 $\pm$ 5.8 & 32.8 $\pm$ 3.0 & 28.6 $\pm$ 3.0 & 23.6 $\pm$ 3.1 \\
    CBraMod~\cite{wang2025cbramod}            & 57.0 $\pm$ 6.9 & 35.0 $\pm$ 3.6 & 31.5 $\pm$ 5.1 & 23.8 $\pm$ 3.2 \\
    CSBrain~\cite{zhou2025csbrain}            & 61.5 $\pm$ 6.5 & 36.5 $\pm$ 3.8 & 31.0 $\pm$ 4.4 & 23.2 $\pm$ 3.4 \\
    BIOT~\cite{yang2023biot}                  & 59.0 $\pm$ 7.1 & 35.7 $\pm$ 3.7 & 30.6 $\pm$ 4.9 & 23.8 $\pm$ 3.2 \\
    TFM-Tokenizer~\cite{liu2025tfm}           & 53.3 $\pm$ 5.6 & 34.0 $\pm$ 3.0 & 28.8 $\pm$ 3.1 & 22.0 $\pm$ 2.1 \\
    \midrule
    \multicolumn{5}{l}{\textit{Same architectures, trained from scratch without pretraining}} \\
    LaBraM (scratch)                          & 59.8 $\pm$ 6.8 & 30.6 $\pm$ 2.0 & 27.8 $\pm$ 2.4 & 21.2 $\pm$ 2.3 \\
    EEGPT (scratch)                           & 57.6 $\pm$ 7.4 & 32.7 $\pm$ 3.2 & 28.3 $\pm$ 2.7 & 19.6 $\pm$ 2.0 \\
    CBraMod (scratch)                         & 54.5 $\pm$ 6.7 & 31.5 $\pm$ 2.0 & 28.2 $\pm$ 2.9 & 21.7 $\pm$ 2.4 \\
    CSBrain (scratch)                         & 59.7 $\pm$ 6.9 & 34.3 $\pm$ 4.6 & 29.8 $\pm$ 3.5 & 22.0 $\pm$ 3.6 \\
    BIOT (scratch)                            & 59.1 $\pm$ 6.8 & 35.8 $\pm$ 2.8 & 32.4 $\pm$ 4.6 & 24.6 $\pm$ 4.1 \\
    \midrule
    \multicolumn{5}{l}{\textit{Ours (no pretraining)}} \\
    Morlet Spectral Transformer       & \textbf{66.5 $\pm$ 8.1} & \textbf{40.7 $\pm$ 5.1} & \textbf{36.1 $\pm$ 4.5} & \textbf{27.9 $\pm$ 3.9} \\ \bottomrule
\end{tabular}
\end{table}

\paragraph{EEG pretraining does not close the cross-subject gap.} 
LaBraM, CBraMod, and CSBrain benefit from pretraining, with gains ranging from 1.2 to 5.3 percentage points across different SEED datasets. EEGPT is an exception, dropping by 1.5 percentage points on SEED and showing negligible improvement on SEED-IV and SEED-V. However, even the strongest pretrained models fail to match the performance of our from-scratch MST, despite relying on thousands of hours of EEG data. This pattern aligns with the observation in the literature \cite{liu2025eegfm} that large-scale EEG pretraining does not necessarily translate into improved cross-subject generalization. Taken together, these results suggest that an important bottleneck lies in how the signal is structured and tokenized---a factor that can be as critical as data scale and model size.

\paragraph{The gain is broadly distributed across subjects rather than driven by outliers.} 
A natural concern for cross-subject decoding is whether the gain is driven by a small set of easy subjects. Figure~\ref{fig:per-subject} shows the sorted distribution of held-out accuracies across the 15 LOSO folds of SEED for MST. Per-subject accuracy ranges from 55.7\% to 82.5\%, with a median of 65.4\% and a mean of 66.5\%. Ten of fifteen subjects meet or exceed the mean accuracy of the strongest pretrained baseline, i.e., 62.4\% of pretrained LaBraM, three subjects exceed 72\%, and only one subject falls below the 56.2\% mean of pretrained EEGPT. This indicates that performance gain is consistently distributed across subjects rather than driven by a few favorable cases.

\begin{figure}[t!]
\centering
\includegraphics[width=.6\textwidth,height=1.4in]{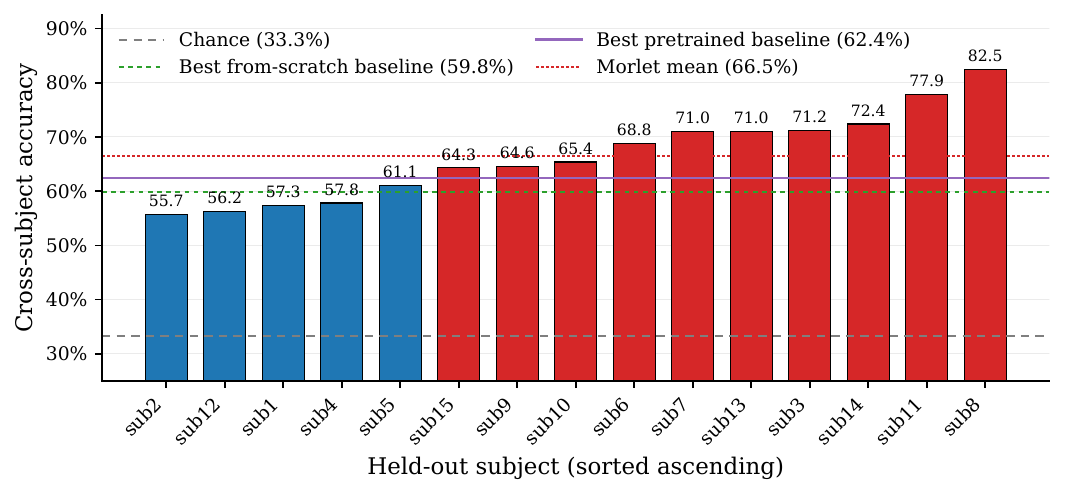}
\caption{Per-subject cross-subject LOSO accuracy on SEED for MST. Red bars indicate subjects whose held-out accuracy equals or exceeds the strongest pretrained baseline.}
\label{fig:per-subject}
\end{figure}

\subsection{Ablation Studies}
\label{sec:ablations}

To isolate the contribution of each design choice, we run ablations on a single held-out subject from SEED (Subject 9, a near-median case in Figure~\ref{fig:per-subject}: rank 7 of 15, accuracy 64.6\%, median 65.4\%), with all other components held to the MST configuration. Unlike the main results, which average over all LOSO folds, each ablation row reports a single held-out-subject evaluation and is intended to show relative rather than absolute effects.
The full MST configuration reaches 64.6\% classification accuracy on this subject and serves as the reference against which every ablation variant in Table~\ref{tab:ablation} is compared.

\paragraph{The Morlet transform is a main source of accuracy.} 
Replacing the Morlet transform with an STFT front-end reduces accuracy by 13.6 percentage points. 
The STFT uses a 400\,ms Hann window with a 200\,ms hop. The magnitude is log-transformed and binned into the same 20 log-spaced frequencies and 16 time frames as the Morlet tokenization, ensuring that all downstream stages remain identical. 
A fixed window cannot simultaneously provide high frequency resolution at low EEG bands and high temporal resolution at high bands, so a single compromise window inevitably loses information at both ends of the spectrum. 
In contrast, the Morlet transform employs a frequency-dependent window, matching the $1/f$ structure of cortical oscillations and assigning longer windows to low frequencies and shorter windows to high frequencies. The resulting performance gain thus stems from the adaptive wavelet design, rather than from simply operating in the spectral domain per se.

\paragraph{Long-context baseline removal suppresses the temporal redundancy that drives subject-specific overfitting.} 
Removing the residual branch and feeding the raw spectrogram $\mathbf{S}$ alone reduces accuracy from 64.6\% to 61.1\%, a drop of 3.5 percentage points. This degradation is consistent with the temporal-redundancy argument in Section~\ref{sec:baseline}. Without the residual, the model is forced to rely on slow-varying components shared across adjacent segments, signals that are largely subject-specific and therefore fail to generalize to unseen individuals. 
Recall Figure~\ref{fig:redundancy}, which illustrates this effect by visualizing the pairwise cosine similarity of adjacent-segment spectrograms within a single SEED trial, before and after baseline removal. The residual branch effectively removes these shared components and highlights the segment-specific spectral deviations that are otherwise obscured in the raw representation.

\paragraph{Per-frequency spatial projections recover canonical scalp topographies that a shared projection cannot represent.} 
Replacing the per-frequency projection $\mathbf{W}^{(f)}$ with a single shared projection across all frequencies reduces accuracy by 2.3 percentage points, to 62.3\%. While smaller than the effect of baseline removal, the drop is consistent across the other evaluation metrics. Recall Figure~\ref{fig:spatial_filters}, which visualizes the learned per-frequency projections as scalp topographies and reveals canonical patterns without any supervision on electrode locations. In the alpha band ($\sim$10\,Hz), dominant components concentrate over posterior and occipital regions, consistent with the well-known posterior alpha rhythm~\citep{klimesch1999eeg}. In the theta band ($\sim$6\,Hz), components are focused along the frontal midline, matching the frontal-midline theta signature linked to cognitive and emotional processing~\citep{klimesch1999eeg,zheng2015investigating}. In the delta band ($\sim$2\,Hz), weights are more diffuse, spanning frontal and posterior regions without a single dominant pattern, as expected for slow oscillations mixed with ocular artifacts~\citep{cohen2014analyzing}. In the beta ($\sim$18\,Hz) and gamma ($\sim$30\,Hz) bands, components are broadly distributed across frontal, central, and temporal areas, reflecting the spatially diffuse nature of higher-frequency activity \citep{tallonbaudry1997oscillatory}. A shared projection collapses these heterogeneous, band-specific patterns into a single filter, thereby discarding the spatial structure that the frequency-specific design preserves and exploits. Moreover, the frequency-specific projections provide a level of interpretability that is largely absent in conventional black-box Transformer and foundation models.

\paragraph{Data augmentation with wavelet time roll accounts for the largest augmentation gain.} 
Turning off the full augmentation stack reduces accuracy by 2.4 percentage points. A leave-one-out analysis reveals a highly asymmetric contribution across the four augmentation strategies. Removing wavelet time roll alone collapses accuracy to 43.6\%, a drop of 21 percentage points that exceeds the loss from disabling all augmentations together. Because the tokens aggregate $N_\tau = 16$ pooled time frames per segment, the model can memorize the absolute position of spectral events within a segment when wavelet time roll is absent. None of the other augmentation strategies provides this form of temporal invariance. Band-specific noise injection is the next most influential component, with a 3.8 percentage point decrease when removed, consistent with its role in disrupting the fixed per-trial noise profile shared across segments. Channel dropout and phase perturbation have smaller effects, contributing decreases of 1.7 and 1.6 percentage points, respectively.

\begin{table}[t!]
\centering
\caption{Component ablations on a fixed SEED held-out subject (Subject 9). Each part removes or swaps one design choice from the full configuration (top row). The best performance per column is shown in bold.}
\label{tab:ablation}
\resizebox{0.75\textwidth}{!}{
\begin{tabular}{lccc} \toprule
    Configuration & Accuracy & Cohen's $\kappa$ & Weighted F1 \\
    \midrule
    MST (full)                                                & \textbf{64.6} & \textbf{46.7} & \textbf{62.1}\\
    \midrule
    w/o Morlet transform (STFT front-end)                             & 51.0 & 25.9 & 49.7 \\
    \midrule
    w/o baseline removal                                   & 61.1 & 41.6 & 59.8 \\
    \midrule
    w/o frequency-specific spatial projection                      & 62.3 & 43.2 & 58.5 \\
    \midrule
    w/o data augmentation                       & 62.2 & 43.0 & 58.9 \\
    w/o phase perturbation            & 63.0 & 44.4 & 61.7 \\
    w/o band-specific noise injection & 60.8 & 41.2 & 57.4 \\
    w/o channel dropout               & 62.9 & 44.3 & 60.9 \\
    w/o wavelet time roll             & 43.6 & 16.2 & 37.2 \\ \bottomrule
\end{tabular}}
\end{table}

\section{Conclusions}

We present the Morlet Spectral Transformer for cross-subject EEG emotion decoding. Without any pretraining, the method consistently outperforms both pretrained EEG foundation models and existing frequency-based approaches across the SEED-family datasets. These results highlight that careful design and learning of the input representation can be as important as model and data scale. With a number of trainable parameters comparable to foundation models but no external pretraining stage, our approach provides a cost-effective alternative to large-scale pretrained solutions. Meanwhile, it retains an interpretable structure aligned with neurophysiological patterns, a property largely absent in existing black-box methods.

\paragraph{Limitations and broader impact.}
The Morlet transform is fixed and tuned to 2--45\,Hz, and may be suboptimal for tasks dominated by faster transients such as epileptic spikes. While the method needs no target-subject calibration, it is still trained per dataset. Cross-dataset pretraining of wavelet-tokenized representations is left to future work. Reliable cross-subject EEG decoding may expand access to mental health monitoring and adaptive brain-computer interfaces, but also raises risks of misuse such as surveillance and non-consensual emotion inference. EEG-based emotion labels are inherently noisy and culturally dependent, which may amplify bias, so deployment should require explicit user consent and appropriate oversight. All data used here are publicly available, and no new human data were collected.

\clearpage
\bibliographystyle{plainnat}
\bibliography{paper}

\clearpage
\appendix

\section{Technical appendices and supplementary material}

\subsection{Visualization of the Morlet Wavelet Bank}
\label{app:morlet-bank}

Figure~\ref{fig:morlet_bank} illustrates the fixed Morlet transform used throughout this article, together with an illustrative example of tokenization of a single EEG channel. The figure comprises three panels. Panel (a) shows the real parts of representative complex Morlet kernels centered at low, middle, and high frequencies on a shared time axis. Panel (b) displays the $F = 20$ log-spaced center frequencies overlaid on the canonical delta, theta, alpha, beta, and gamma bands. Panel (c) shows a sample raw EEG segment and its log-amplitude time-frequency grid $S_{c, f, \tau}$ obtained via complex wavelet convolution followed by adaptive temporal pooling into $N_\tau = 16$ bins.

\begin{figure}[b]
\centering
\includegraphics[width=.8\textwidth]{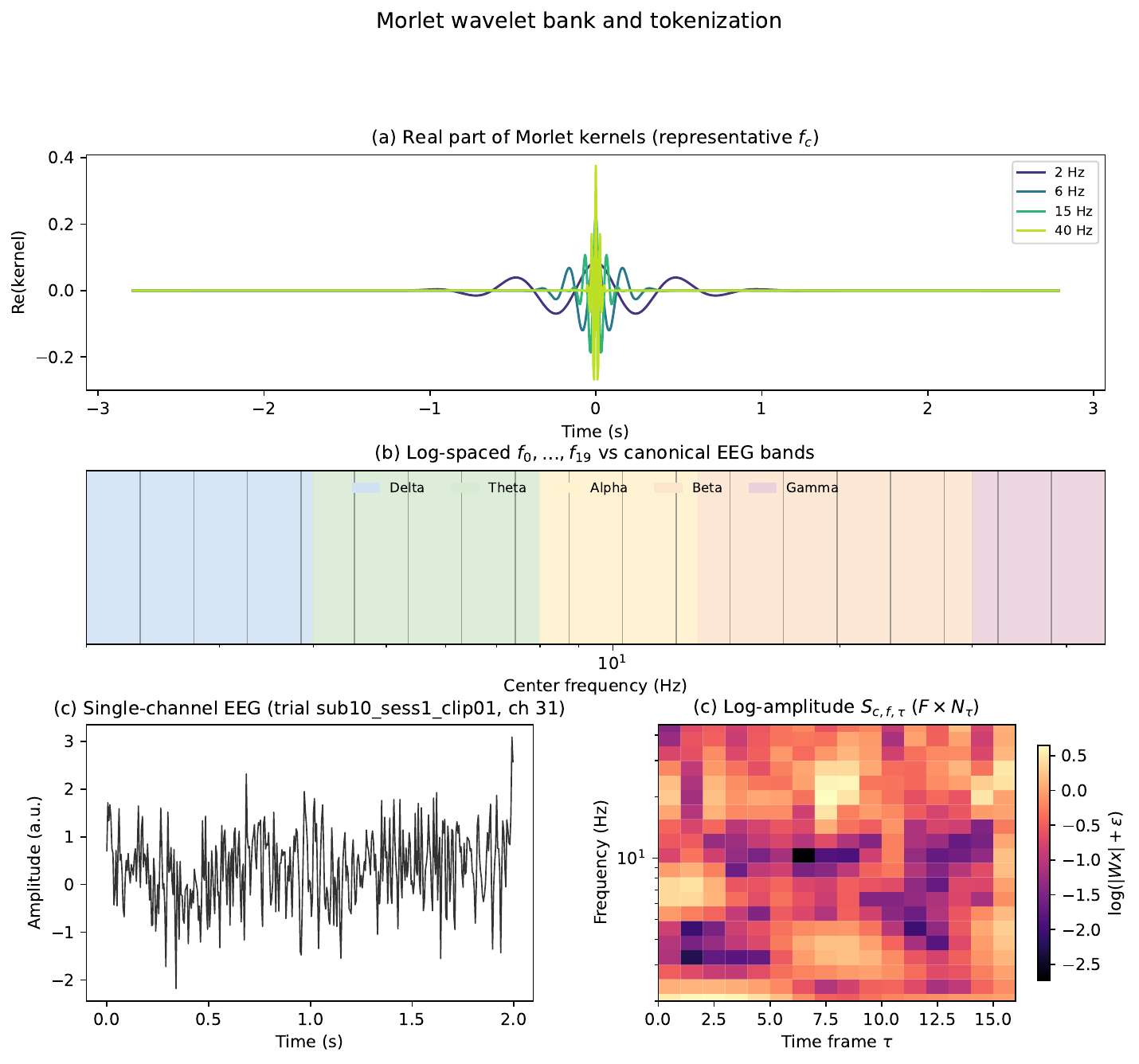}
\caption{Morlet wavelet tokenization. (a) The complex Morlet kernels at different center frequencies, with longer temporal windows at low frequencies and shorter windows at high frequencies. (b) The log-spaced center frequencies aligned with canonical EEG bands. (c) An example raw EEG channel and its corresponding log-amplitude time-frequency grid produced by the wavelet bank and adaptive time pooling.}
\label{fig:morlet_bank}
\end{figure}

Several properties of the Morlet transform are worth highlighting. First, the kernels in panel (a) make explicit the adaptive trade-off encoded in $\sigma_f = n_{\mathrm{cyc}} / (2\pi f)$ with $n_{\mathrm{cyc}} = 5$. A 2\,Hz kernel spans roughly 2.5 seconds and resolves only a small number of cycles, yielding sharp frequency localization at the cost of broad temporal support. In contrast, a 40\,Hz kernel occupies a window on the order of 0.1\,seconds and captures many cycles inside that window, providing sharp temporal localization at the cost of broader frequency support. This frequency-dependent windowing aligns with the multi-scale nature of cortical oscillations and contrasts with the fixed-window short-time Fourier transform.

Second, the log-spaced frequency grid in panel (b) approximately uniformly samples the canonical EEG bands. Roughly one quarter of the 20 frequencies fall within each of the alpha, beta, and gamma bands, while the remaining quarter cover the delta and theta bands, which capture frontal-midline activity linked to emotion. This allocation reflects the empirical observation that emotion-relevant power modulations are distributed across the full 2 to 45\,Hz range rather than concentrated in any single band.

Third, the example tokenization in panel (c) shows that the resulting $F \times N_\tau$ grid behaves as a continuous, multi-resolution generalization of differential entropy. Within each band, the model observes how power evolves across the 16 pooled time frames. Across bands, the model captures the full spectral profile rather than a single scalar summary. Because the bank is fixed and non-learnable, the time-frequency geometry is determined entirely by neuroscience priors, allowing the method to focus its capacity on spatial mixing as in Section~\ref{sec:spatial} and relational reasoning as in Section~\ref{sec:transformer}. Moreover, the bank introduces no trainable parameters, and thus incurs negligible memory overhead at inference and produces representations with clear physical interpretation, i.e., log amplitude within known frequency bands, which we find valuable for diagnosing failure cases on individual held-out subjects.

\subsection{Equivalence Between Morlet Tokenization and Differential Entropy}
\label{app:de-equivalence}

We make precise the claim of Section~\ref{sec:wavelet} that the Morlet wavelet token $S_{c,f,\tau}$ generalizes the differential entropy (DE) feature widely used in EEG emotion recognition~\cite{duan2013differential, zheng2015investigating}. First, we recall the standard reduction of DE to log band power for a Gaussian band-limited signal  in Lemma~\ref{lem:de-bandpower}. Then, we show that the Morlet feature reduces to DE plus a constant under classical narrow-band stationarity assumptions in Proposition~\ref{prop:morlet-de}. Finally, we describe in what sense the Morlet tokenization strictly generalizes DE.

Fix a channel index $c$ and drop it for notational brevity. Let $x(t)$ be a real, zero-mean, wide-sense stationary EEG segment, and let $b = [f_1, f_2]$ denote a frequency band. Denote by $x_b(t)$ the band-limited version of $x(t)$ obtained by an ideal bandpass filter onto $b$, and by
\begin{equation*}
P_b \;=\; \mathbb{E}\!\left[x_b^2(t)\right] \;=\; \int_{f_1}^{f_2} S_{xx}(f)\, df
\end{equation*}
its band power, where $S_{xx}(f)$ is the power spectral density (PSD) of $x$.

\begin{lemma}[DE for a band-limited Gaussian signal {\cite{duan2013differential}}]
\label{lem:de-bandpower}
If $x(t)$ is Gaussian, then for any band $b$,
\begin{equation*}
h_b \;:=\; h\!\bigl(x_b(t)\bigr) \;=\; \tfrac{1}{2}\log\!\bigl(2\pi e\, P_b\bigr) \;=\; \tfrac{1}{2}\log P_b \;+\; c_0, \qquad c_0 = \tfrac{1}{2}\log(2\pi e).
\end{equation*}
\end{lemma}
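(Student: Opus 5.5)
The plan is to reduce the statement to the entropy of a single Gaussian random variable. Fix a time $t$. The first step is to show that $x_b(t)$ is Gaussian. Since $x$ is a zero-mean Gaussian process and the ideal bandpass filter is a linear, time-invariant operator, $x_b = g_b * x$, where $g_b$ is the impulse response of the bandpass onto $b$. A linear functional of a Gaussian process is Gaussian, so $x_b(t)$ is jointly Gaussian with the rest of the process. I would justify this by viewing $x_b(t)$ as the $L^2$ limit of finite linear combinations $\sum_k a_k x(t_k)$: each approximant is Gaussian, and $L^2$ limits of Gaussians are Gaussian. Because the filter is linear and $\mathbb{E}[x(t)] = 0$, the mean of $x_b(t)$ is also zero.

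The second step computes the variance. By wide-sense stationarity, $x_b$ is itself wide-sense stationary, and its PSD is $|G_b(f)|^2 S_{xx}(f)$. For the ideal filter, $|G_b(f)|^2 = \mathbf{1}\{|f| \in b\}$. Applying Wiener--Khinchin gives
\[
\mathrm{Var}\bigl(x_b(t)\bigr) = \mathbb{E}\bigl[x_b^2(t)\bigr] = \int \mathbf{1}\{|f|\in b\}\, S_{xx}(f)\,df = P_b .
\]
This uses the paper's one-sided convention for $S_{xx}$; with a two-sided convention one would pick up a factor of $2$. Either way, the key point is that the variance does not depend on $t$.

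The final step is the standard Gaussian integral. For $Y \sim \mathcal{N}(0,\sigma^2)$ with density $p(y)$, we have $-\log p(y) = \tfrac12\log(2\pi\sigma^2) + y^2/(2\sigma^2)$. Taking expectations gives
\[
h(Y) = \tfrac12\log(2\pi\sigma^2) + \tfrac12 = \tfrac12\log(2\pi e\,\sigma^2).
\]
Substituting $\sigma^2 = P_b$ yields $h_b = \tfrac12\log P_b + c_0$ with $c_0 = \tfrac12\log(2\pi e)$. The argument needs $P_b > 0$, which I would state as a nondegeneracy assumption; if $P_b = 0$, both sides equal $-\infty$.

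The only step needing care is the first one, the Gaussianity of the filtered signal. An ideal bandpass filter has a non-integrable sinc-type impulse response, so pathwise convolution is not directly defined. I would handle this through the spectral (Cramér) representation of $x$, under which $x_b(t) = \int_{|f|\in b} e^{j2\pi f t}\,dZ(f)$ with $Z$ a Gaussian orthogonal-increment measure. This makes both Gaussianity and the variance formula immediate.
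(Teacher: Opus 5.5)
Your proof is correct and follows the same route as the paper. Both arguments get Gaussianity of $x_b(t)$ from linearity of the filter, identify its variance as $P_b$, and apply the univariate Gaussian entropy formula $\tfrac12\log(2\pi e\,\sigma^2)$; your added care about the ideal filter (via the Cram\'er spectral representation), the one-sided PSD convention, and the nondegeneracy condition $P_b>0$ fills in details the paper leaves implicit.
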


\begin{proof}
A linear filter applied to a Gaussian process produces a Gaussian process, so $x_b(t)$ is zero-mean Gaussian with variance $\mathrm{Var}(x_b(t)) = \mathbb{E}[x_b^2] = P_b$ at every fixed $t$. The differential entropy of a univariate Gaussian with variance $\sigma^2$ is $\frac{1}{2}\log(2\pi e\sigma^2)$~\cite{cover2006elements}, which gives the claim with $\sigma^2 = P_b$.
\end{proof}

This calculation matches the differential entropy feature used in SEED-style emotion recognition pipelines~\cite{zheng2015investigating}. In those pipelines, each segment is bandpass-filtered into the five canonical bands, the empirical variance $\widehat{P}_b$ is estimated for each channel and band, and $\frac{1}{2}\log(2\pi e \widehat{P}_b)$ is recorded as the DE feature. Up to the additive constant $c_0$, the recorded DE feature is therefore $\frac{1}{2}\log\widehat{P}_b$.

Recall the complex Morlet wavelet $\psi_f(t) = e^{-t^2/(2\sigma_f^2)} e^{j 2\pi f t}$ with $\sigma_f = n_{\mathrm{cyc}}/(2\pi f)$, the analytic-like signal $z_f(t) = (x*\psi_f)(t) \in \mathbb{C}$, and the instantaneous amplitude $a_f(t) = |z_f(t)|$. After adaptive temporal pooling into $N_\tau$ bins, the Morlet token is
\begin{equation*} 
S_{f,\tau} \;=\; \log\!\Bigl(\overline{a}_{f,\tau} + \epsilon\Bigr), \qquad
\overline{a}_{f,\tau} \;=\; \frac{1}{|I_\tau|} \int_{I_\tau} a_f(t)\, dt,
\end{equation*}
where $I_\tau$ is the $\tau$th time bin.

\begin{proposition}[DE equivalence under narrow-band stationarity]
\label{prop:morlet-de}
Suppose, on the support of the segment used to compute the feature, the following conditions hold. 
\begin{enumerate}[({A}1)]
\item $x(t)$ is zero-mean wide-sense stationary and Gaussian.
\item  The Morlet response $z_f(t)$ is approximately analytic and narrow-band, i.e.\ its bandwidth $\Delta f \approx f / n_{\mathrm{cyc}}$ is small relative to its center frequency $f$. 
\item The temporal pooling window $I_\tau$ is long enough that ergodic averaging of $a_f(t)$ holds, i.e., $\overline{a}_{f,\tau} \approx \mathbb{E}\,a_f(t)$, and short enough that local stationarity holds within $I_\tau$.
\item $\epsilon \ll \overline{a}_{f,\tau}$, so $\log(\overline{a}_{f,\tau} + \epsilon) = \log\overline{a}_{f,\tau} + o(1)$.
\end{enumerate}
Let $b_f$ denote the equivalent passband centered at $f$ with bandwidth $\Delta f$, and let $P_{b_f} = \int |\hat\psi_f(\nu)|^2 S_{xx}(\nu)\, d\nu$ be the Morlet-weighted band power. Then, 
\begin{equation*}
  S_{f,\tau} \;=\; \tfrac{1}{2}\log P_{b_f} \;+\; c_1 \;+\; o(1)
  \;=\; h_{b_f} \;+\; (c_1 - c_0) \;+\; o(1),
\end{equation*}
where $c_1 = \tfrac{1}{2}\log(\pi/2)$ is a fixed constant that does not depend on the channel, the frequency $f$, the time bin $\tau$, the signal, or the subject.
\end{proposition}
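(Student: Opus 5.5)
The plan is to reduce $S_{f,\tau}$ to the log of the mean of a Rayleigh-distributed amplitude, and then invoke Lemma~\ref{lem:de-bandpower}.

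First, by (A1), $z_f = x*\psi_f$ is a linear functional of a stationary Gaussian process, so for each fixed $t$ the pair $(\Re z_f(t), \Im z_f(t))$ is jointly Gaussian with zero mean. Its total second moment is
\[
\mathbb{E}|z_f(t)|^2 = \int |\hat\psi_f(\nu)|^2 S_{xx}(\nu)\,d\nu = P_{b_f},
\]
by the Wiener--Khinchin relation for linear filtering. This needs only stationarity, and I will state it explicitly.

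Second, I will use (A2) to show that $z_f(t)$ is approximately circularly symmetric. Since $\psi_f$ has negligible spectral mass at negative frequencies when $n_{\mathrm{cyc}}$ is moderate, $z_f$ is approximately analytic. For a stationary analytic process the pseudo-covariance $\mathbb{E}[z_f(t)^2] = \int \hat\psi_f(\nu)\hat\psi_f(-\nu) S_{xx}(\nu)\,d\nu$ vanishes up to the negative-frequency leakage. Hence $\Re z_f$ and $\Im z_f$ are uncorrelated, each with variance $P_{b_f}/2$, and therefore independent.

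Third, I will conclude that $a_f(t) = |z_f(t)|$ is Rayleigh with scale $\sigma^2 = P_{b_f}/2$, so
\[
\mathbb{E}\,a_f(t) = \sigma\sqrt{\pi/2} = \sqrt{\pi P_{b_f}/4}.
\]
By (A3), $\overline{a}_{f,\tau} = \mathbb{E}\,a_f(t)\,(1+o(1))$, and by (A4) the $\epsilon$ contributes only $o(1)$ after the log. This gives
\[
S_{f,\tau} = \tfrac12\log P_{b_f} + \tfrac12\log(\pi/4) + o(1).
\]
Substituting Lemma~\ref{lem:de-bandpower} with $b=b_f$ then yields $S_{f,\tau} = h_{b_f} + (c_1-c_0) + o(1)$. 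To apply the lemma I interpret $x_{b_f}$ as the output of the real filter with gain $|\hat\psi_f|$, so that its variance is $P_{b_f}$; I need to check that this variance matches the normalization above (real part versus modulus).

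The main obstacle is the exact value of the constant. A direct Rayleigh computation gives $\tfrac12\log(\pi/4)$, not the stated $\tfrac12\log(\pi/2)$. The discrepancy is the factor $2$ between $\mathbb{E}|z|^2$ and the power of the real bandpassed signal $\Re z_f$, and it depends on how $\hat\psi_f$ and $P_{b_f}$ are normalized, for example whether $P_{b_f}$ is defined through the one-sided PSD or through $\mathbb{E}(\Re z_f)^2$. I would fix the convention so that $P_{b_f} = \mathbb{E}[(\Re z_f)^2] = \sigma^2$. Then $\mathbb{E}\,a_f = \sqrt{\pi P_{b_f}/2}$ and $c_1 = \tfrac12\log(\pi/2)$ as claimed.

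Whichever convention is adopted, the essential point holds: the constant is universal, independent of channel, $f$, $\tau$, signal, and subject. This follows because the Rayleigh mean-to-scale ratio is scale-free. The only approximations are the analytic leakage in (A2) and the ergodic error in (A3), both of which are collected into the $o(1)$ term.
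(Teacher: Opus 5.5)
Your proposal is correct and follows the paper's proof essentially step for step. Approximate analyticity from (A2) makes the quadratures of $z_f$ uncorrelated with a common variance $\sigma^2$, so $a_f$ is Rayleigh with $\mathbb{E}\,a_f=\sigma\sqrt{\pi/2}$; (A3)--(A4) then carry this through the log, and Lemma~\ref{lem:de-bandpower} turns $\tfrac12\log P_{b_f}$ into $h_{b_f}$. Your pseudo-covariance formula $\int\hat\psi_f(\nu)\hat\psi_f(-\nu)S_{xx}(\nu)\,d\nu$ justifies the circularity more explicitly than the paper, which only cites it.

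Your factor-of-two concern is well founded, and the paper resolves it exactly as you do. Its Step~2 sets $P_{b_f}=\mathbb{E}\,|z_f|^2/2=\sigma^2$, which is the variance of the real Gaussian $\Re z_f$, so the lemma applies directly. With a two-sided PSD, however, the integral $\int|\hat\psi_f|^2S_{xx}\,d\nu$ written in the statement equals $\mathbb{E}\,|z_f|^2$, and the literal formula would give the constant $\tfrac12\log(\pi/4)$ instead.
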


\begin{proof}
We divide the proof in four steps. 

Step 1 (envelope statistics). Convolving the Gaussian process $x$ with the complex kernel $\psi_f$ yields a zero-mean complex Gaussian process $z_f(t) = u(t) + j v(t)$. By condition (A2), $z_f$ is approximately the analytic signal of the bandpass component of $x$ in $b_f$, so the real and imaginary parts $u, v$ are jointly Gaussian, zero-mean, of equal variance $\sigma_f^2$, and uncorrelated~\cite{rice1944mathematical, papoulis2002probability}. The instantaneous amplitude $a_f(t) = \sqrt{u^2(t) + v^2(t)}$ therefore follows a Rayleigh distribution with parameter $\sigma_f$ at every fixed $t$, so that
\begin{equation*}
  \mathbb{E}\,a_f(t) \;=\; \sigma_f\,\sqrt{\pi/2},
  \qquad
  \mathbb{E}\,a_f^2(t) \;=\; 2\sigma_f^2.
\end{equation*}

Step 2 (band power). The Morlet-weighted power is
\begin{equation*}
  P_{b_f}
  \;=\; \mathbb{E}\,|z_f(t)|^2 / 2
  \;=\; \mathbb{E}\,a_f^2(t) / 2
  \;=\; \sigma_f^2,
\end{equation*}
where the factor $1/2$ converts complex power into real-signal power, matching the conventional definition $P_b = \mathbb{E}\,x_b^2$ used in Lemma~\ref{lem:de-bandpower}~\cite{papoulis2002probability}.

Step 3 (token reduces to log band power). By (A3), $\overline{a}_{f,\tau} \approx \mathbb{E}\,a_f(t) = \sigma_f\sqrt{\pi/2}$. Combining with (A4),
\begin{align*}
  S_{f,\tau}
  &= \log\!\bigl(\overline{a}_{f,\tau} + \epsilon\bigr)
   = \log\!\bigl(\sigma_f \sqrt{\pi/2}\bigr) + o(1) \\
  &= \tfrac{1}{2}\log \sigma_f^2 + \tfrac{1}{2}\log(\pi/2) + o(1)
   = \tfrac{1}{2}\log P_{b_f} + c_1 + o(1).
\end{align*}

Step 4 (relation to DE). Lemma~\ref{lem:de-bandpower} gives $h_{b_f} = \tfrac{1}{2}\log P_{b_f} + c_0$. Eliminating $\tfrac{1}{2}\log P_{b_f}$ yields
\begin{equation*}
  S_{f,\tau} \;=\; h_{b_f} + (c_1 - c_0) + o(1),
\end{equation*}
which proves the claim. The additive constant $c_1 - c_0 = \tfrac{1}{2}\log(\pi/2) - \tfrac{1}{2}\log(2\pi e) = -\tfrac{1}{2}\log(4e)$ is absorbed by the affine input layer that follows in the network.
\end{proof}

We briefly comment that the conditions in Proposition~\ref{prop:morlet-de} are reasonable, and are often imposed in neural time-frequency analysis \cite{cohen2014analyzing}.

Finally, we discuss in what sense Morlet tokenization strictly generalizes DE. Proposition~\ref{prop:morlet-de} establishes the equivalence on the intersection of two restrictive regimes. One regime is a small set of disjoint, fixed canonical bands ($F = 5$), and the other regime is a single large temporal bin ($N_\tau = 1$) over which strict stationarity is assumed. The Morlet tokenization in Section~\ref{sec:wavelet} relaxes both, in the following sense.
\begin{itemize}
\item \textbf{Frequency axis.} Replacing the five designed bands by $F = 20$ logarithmically spaced Morlet kernels yields a continuous multi-resolution sampling of $[2, 45]$\,Hz. Each kernel has bandwidth $\Delta f \propto f$, matching the $1/f$ structure of cortical oscillations and giving sharper frequency resolution at low frequencies and sharper time resolution at high frequencies~\cite{cohen2014analyzing}. The five-band DE feature is recovered, up to constants and to the bandpass-shape mismatch, by linearly aggregating Morlet tokens whose center frequencies fall inside each canonical band.

\item \textbf{Time axis.} Setting $N_\tau = 16$ produces $N_\tau$ local envelope statistics per segment instead of one, which captures within-segment spectral dynamics that DE collapses by computing a single variance per band. When $N_\tau = 1$ the Morlet feature reduces to the segment-level statistic of Proposition~\ref{prop:morlet-de}.

\item \textbf{Distributional assumption.} The DE feature is exact only under joint Gaussian assumption, as shown in Lemma~\ref{lem:de-bandpower}. For non-Gaussian or non-stationary signals it is misspecified. The Morlet token $\log\overline{a}_{f,\tau}$ is the empirical log mean envelope and remains meaningful for arbitrary distributions, with the Rayleigh identity $\mathbb{E}\,a = \sigma\sqrt{\pi/2}$ recovered as a special case.
\end{itemize}

Therefore, the Morlet tokenization contains the DE feature as a low-resolution, large-time-bin, Gaussian-stationary special case, and otherwise provides a strictly richer time-frequency representation of the same underlying spectral signal long exploited in EEG emotion recognition.

\subsection{Datasets, Preprocessing, Hardware, and Training Hyperparameters}
\label{app:datasets}

All four datasets in the SEED family~\cite{zheng2015investigating, zheng2018emotionmeter, liu2021comparing, jiang2024seedvii} share the same recording paradigm. Subjects watch emotion-eliciting film clips while 62-channel EEG is recorded with a Neuroscan SynAmps2 system at 1000~Hz. The datasets differ in the number of emotion classes, subjects, clips, and sessions, as summarized in Table~\ref{tab:datasets}.

\begin{table}[h]
  \centering
  \caption{Per-dataset configurations of the SEED family.}
  \label{tab:datasets}
  \small
  \begin{tabular}{lcccc}
    \toprule
    Dataset & Classes & Subjects & Clips per session & Sessions \\
    \midrule
    SEED      & 3 & 15 & 15 & 3 \\
    SEED-IV   & 4 & 15 & 24 & 3 \\
    SEED-V    & 5 & 16 & 15 & 3 \\
    SEED-VII  & 7 & 20 & 20 & 4 \\
    \bottomrule
  \end{tabular}
\end{table}

\paragraph{Preprocessing.}
Raw EEG is band-pass filtered between 0.1 and 75~Hz, notch-filtered at 50~Hz to remove power-line interference, and downsampled from 1000~Hz to 200~Hz. We retain the 62 standard channels in a consistent order across datasets and apply no ICA or manual artifact rejection. We then apply window-level $z$-score normalization. For each subject and each channel, we compute the mean and standard deviation across all trials of that subject, and standardize the raw signal with these per-subject per-channel statistics before any spectral decomposition. This removes per-subject amplitude offsets arising from differences in skull thickness, electrode impedance, and recording conditions. Each continuous recording is then segmented into non-overlapping 2-second windows, with 400 samples at 200~Hz, and each window inherits the emotion label of its parent clip.

\paragraph{Evaluation protocol.}
In each LOSO fold one subject is held out as the testing set and the remaining subjects form the training set, with no fine-tuning, adaptation, or normalization performed on the testing subject. The number of folds equals the number of subjects in the dataset, i.e., 15 for SEED and SEED-IV, 16 for SEED-V, and 20 for SEED-VII.

\paragraph{Hardware.}
All models, including baselines, are trained on a single NVIDIA RTX Pro 6000 Blackwell Max-Q GPU. One LOSO fold for MST takes around 6 hours. 

\paragraph{Dataset access and licenses.}
The SEED-family datasets are provided by the BCMI laboratory at Shanghai Jiao Tong University and are cited in Section~\ref{sec:setup}. They are used under the SJTU Emotion EEG Dataset License Agreement, which permits academic research use, prohibits commercial use and redistribution, requires approved access through the SEED website, and asks users to cite the relevant dataset papers.

\paragraph{Baseline re-implementation.} 
To ensure the protocol consistency across datasets, we re-implement and re-run the baselines, LaBraM, EEGPT, CBraMod, CSBrain, BIOT, and TFM-Tokenizer, under the same strict LOSO protocol described in the experiment setup section, rather than citing numbers from prior benchmarks. Each baseline receives the same subject-normalized 2-second EEG windows with shape $B \times T \times C$, where $T=400$ and $C=62$. The output dimension of every classifier is set to the number of emotion classes in the target dataset. The pretrained variant loads the authors' released checkpoint and fine-tunes per fold on the source subjects only, following the fine-tuning recipe prescribed by the original paper. The from-scratch variant trains the identical architecture from random initialization on the source subjects only, with no external pretraining data. TFM-Tokenizer is evaluated with the released tokenizer and encoder checkpoints, and therefore has no from-scratch variant in our table.

\paragraph{Baseline implementation details.}
LaBraM uses our wrapper around the official \texttt{NeuralTransformer} fine-tuning model. The 2-second input is split into two non-overlapping 1-second patches of length $200$ for every channel, reshaped to $B \times C \times 2 \times 200$, and divided by $100$ following the LaBraM convention for microvolt-scaled inputs. We use the base setting with embedding dimension $200$, depth $12$, $10$ attention heads, MLP ratio $4$, output channel count $8$, drop path rate $0.1$, and mean pooling. The 62 SEED channels are mapped to LaBraM 10-20 positional indices through the channel-name lookup copied from the original code. During pretrained runs, \texttt{labram-base.pth} is loaded after removing pretraining-only keys and remapping the normalization weights used by the fine-tuning backbone. The downstream head is a linear layer applied to the mean of all patch tokens.

EEGPT uses the official EEGTransformer encoder. We use a wrapper to transpose each window to $B \times C \times T$ and apply a channel-wise patch embedding with patch size $64$ and stride $64$ over the 400-sample sequence. We use target length $400$, target channels $62$, embedding dimension $512$, four summary tokens, depth $8$, $8$ attention heads, MLP ratio $4$, zero dropout, and no channel projection. The released \texttt{eegpt\_mcae\_58chs\_4s\_large4E.ckpt} checkpoint is loaded by stripping training prefixes, discarding reconstruction and prediction heads, and partially copying compatible channel embeddings when needed. The four summary streams are averaged across time patches and streams, then passed to a linear classifier.

CBraMod uses the official criss-cross Transformer backbone. We use a wrapper to reshape each 400-sample window to $B \times C \times 2 \times 200$. Its patch embedding combines a temporal convolutional projection with an FFT magnitude projection over $101$ one-sided frequency bins. We use $d_{\mathrm{model}}=200$, feed-forward dimension $800$, $12$ layers, $8$ heads, sequence length $30$, patch length $200$, and two patches per channel. The original projection head is replaced by an identity module so that the downstream classifier operates on backbone features. The reported classifier is average pooling over channels and patches followed by a linear layer. For pretrained runs, \texttt{pretrained\_weights.pth} is loaded into the backbone after classifier and output-projection keys are removed.

CSBrain uses the official CSBrain downstream wrapper. It shares the same raw window shape, patch length, depth, hidden dimension, feed-forward dimension, and average-pooling classifier as CBraMod. In addition, it sorts the 62 channels according to the SEED topological order, groups them into the five brain regions used by the original implementation, and applies temporal embedding kernels of size $1$, $3$, and $5$ inside each encoder layer. Its patch embedding again combines temporal convolution and FFT magnitude features before the CSBrain regional attention blocks. The pretrained run loads \texttt{CSBrain.pth} into shape-compatible backbone parameters only, while leaving the downstream classifier randomly initialized.

BIOT uses the official BIOT encoder. We use a wrapper to transpose the input to $B \times C \times T$ and transform each channel by STFT with $n_{\mathrm{fft}}=200$, hop length $100$, and no centering.

\paragraph{Morlet training hyperparameters.}
For the subject-independent SEED experiments, we train MST with cross-entropy loss and select checkpoints by validation balanced accuracy on the source subjects. We use AdamW with learning rate $2\times 10^{-4}$, weight decay $0.05$, cosine learning-rate decay, $1000$ warmup steps, batch size $128$, validation batch size $128$, gradient accumulation of $1$, gradient clipping at norm $1.0$, and no mixed precision. Training runs for at most $50{,}000$ steps, validation is performed every $200$ steps, and checkpoints are saved every $1000$ steps. The random seed is set to $42$ for the reported configuration.

The model configuration follows those values in Section~\ref{sec:setup}. We use $20$ logarithmically spaced Morlet frequencies from $2$ to $45$\,Hz, $5$ wavelet cycles, sampling rate $200$\,Hz, $16$ pooled time frames, log stabilizer $10^{-6}$, $62$ EEG channels, spatial dimension $16$, embedding dimension $256$, $12$ Transformer layers, $8$ attention heads, dropout $0.05$, MLP expansion ratio $4$, and long-context baseline size $K=5$. The supervised contrastive and VICReg auxiliary losses are disabled in the reported runs, with weights set to zero.

For data augmentation, phase perturbation is applied with probability $1.0$, band-specific noise with probability $0.5$, channel dropout with probability $0.5$, and post-wavelet time roll with probability $0.5$. Phase perturbation samples its standard deviation uniformly from $\pi/6$ to $\pi/2$. Band-specific noise samples one to three bands and an amplitude scale uniformly from $0.1$ to $0.3$. Channel dropout samples the drop fraction uniformly from $0.1$ to $0.3$. Channel-gain, spectral-tilt, and intra-label interpolation augmentations are disabled.

We tune the hyperparameters using only the training subjects within each LOSO fold. The selected configuration is chosen by validation balanced accuracy and then kept fixed for all held-out subjects. The tuning considers nearby values for learning rate, dropout, number of pooled time frames, spatial dimension, baseline context length, and augmentation probabilities, with the final values listed above.

\subsection{Full Evaluation Metric Breakdown}
\label{app:full-metrics}


Table~\ref{tab:full-metrics} reports Cohen's $\kappa$ and weighted F1 for all methods across the four SEED-family datasets, with all values expressed as percentages. Results are summarized as mean $\pm$ standard deviation over LOSO folds (15 for SEED, 15 for SEED-IV, 16 for SEED-V, and 20 for SEED-VII). For the SEED dataset, on Cohen's $\kappa$, which accounts for class imbalance, the proposed MST shows a 6.1-percentage-point improvement over pretrained LaBraM and a 10.0-percentage-point improvement over LaBraM trained from scratch. On weighted F1, MST shows a 4.9-percentage-point improvement over pretrained LaBraM and a 6.6-percentage-point improvement over LaBraM trained from scratch. 

\begin{table}[t!]
\centering
\caption{Cross-subject leave-one-subject-out performance on the four SEED-family datasets. All methods are run under the same no-calibration protocol. The mean $\pm$ standard deviation of Cohen's $\kappa$ and weighted F1 (\%) across folds is shown. The best performance per column is shown in bold. TFM-Tokenizer Cohen's $\kappa$ std on SEED-V is inflated by a single outlier fold (sub2, $\kappa = 0.87$). CSBrain pretrained Cohen's $\kappa$ std on SEED-VII is inflated by a single outlier fold (sub18, $\kappa = 0.81$).}
\label{tab:full-metrics}
\setlength{\tabcolsep}{3pt}
\resizebox{\textwidth}{!}{
\begin{tabular}{lcccccccc} \toprule
    & \multicolumn{2}{c}{SEED} & \multicolumn{2}{c}{SEED-IV} & \multicolumn{2}{c}{SEED-V} & \multicolumn{2}{c}{SEED-VII} \\
    \cmidrule(lr){2-3}\cmidrule(lr){4-5}\cmidrule(lr){6-7}\cmidrule(lr){8-9}
    Method & $\kappa$ & F1 & $\kappa$ & F1 & $\kappa$ & F1 & $\kappa$ & F1 \\
    \midrule
    \multicolumn{9}{l}{\textit{Foundation models / pretrained tokenizer + fine-tuning}} \\
    LaBraM         & 43.5 $\pm$ 10.8 & 60.8 $\pm$ 7.8 & 13.7 $\pm$ 4.5 & 33.7 $\pm$ 4.3 & 13.8 $\pm$ 6.5 & 27.6 $\pm$ 6.7 & 11.0 $\pm$ 4.1 & 21.8 $\pm$ 4.0 \\
    EEGPT          & 34.2 $\pm$ 8.7  & 54.6 $\pm$ 5.9 & 9.3 $\pm$ 4.5  & 29.6 $\pm$ 4.1 & 8.0 $\pm$ 4.3  & 23.8 $\pm$ 5.2 & 9.0 $\pm$ 3.6  & 19.8 $\pm$ 3.5 \\
    CBraMod        & 37.2 $\pm$ 9.6  & 54.5 $\pm$ 8.2 & 12.4 $\pm$ 5.1 & 31.2 $\pm$ 5.5 & 12.1 $\pm$ 6.8 & 26.4 $\pm$ 6.8 & 9.5 $\pm$ 3.8  & 19.1 $\pm$ 4.1 \\
    CSBrain        & 42.2 $\pm$ 9.7  & 60.3 $\pm$ 7.0 & 14.3 $\pm$ 5.7 & 33.3 $\pm$ 4.1 & 10.3 $\pm$ 6.5 & 25.1 $\pm$ 5.8 & 12.5 $\pm$ 16.7\textsuperscript{$\ddagger$} & 18.1 $\pm$ 4.8 \\
    BIOT           & 38.6 $\pm$ 10.6 & 57.8 $\pm$ 7.4 & 13.5 $\pm$ 4.9 & 33.9 $\pm$ 4.4 & 12.2 $\pm$ 6.4 & 25.6 $\pm$ 6.5 & 10.2 $\pm$ 3.9 & 20.1 $\pm$ 3.7 \\
    TFM-Tokenizer  & 29.9 $\pm$ 8.5  & 51.7 $\pm$ 6.0 & 11.2 $\pm$ 4.4 & 31.7 $\pm$ 3.9 & 12.8 $\pm$ 20.3\textsuperscript{$\dagger$} & 22.4 $\pm$ 5.6 & 6.9 $\pm$ 2.6  & 16.6 $\pm$ 2.6 \\
    \midrule
    \multicolumn{9}{l}{\textit{Same architectures, trained from scratch}} \\
    LaBraM (scratch)   & 39.6 $\pm$ 10.1 & 59.1 $\pm$ 6.3 & 5.7 $\pm$ 3.3   & 25.8 $\pm$ 3.3 & 5.7 $\pm$ 4.4  & 20.7 $\pm$ 6.5 & 5.6 $\pm$ 3.2  & 15.6 $\pm$ 3.3 \\
    EEGPT (scratch)    & 36.4 $\pm$ 11.1 & 56.0 $\pm$ 8.0 & 8.8 $\pm$ 4.4   & 29.2 $\pm$ 4.2 & 5.8 $\pm$ 4.7  & 19.6 $\pm$ 6.0 & 3.4 $\pm$ 2.5  & 11.9 $\pm$ 2.7 \\
    CBraMod (scratch)  & 31.6 $\pm$ 10.1 & 53.1 $\pm$ 7.3 & 7.1 $\pm$ 3.5   & 28.4 $\pm$ 3.7 & 7.5 $\pm$ 4.3  & 23.8 $\pm$ 4.8 & 7.0 $\pm$ 3.2  & 17.7 $\pm$ 3.3 \\
    CSBrain (scratch)  & 39.5 $\pm$ 10.3 & 58.3 $\pm$ 6.5 & 11.3 $\pm$ 7.0  & 30.2 $\pm$ 5.6 & 9.4 $\pm$ 5.0  & 24.0 $\pm$ 5.2 & 8.0 $\pm$ 4.5  & 18.4 $\pm$ 4.4 \\
    BIOT (scratch)     & 38.6 $\pm$ 10.2 & 57.7 $\pm$ 7.4 & 13.0 $\pm$ 3.9  & 33.2 $\pm$ 3.9 & 13.6 $\pm$ 6.7 & 27.1 $\pm$ 7.2 & 10.5 $\pm$ 4.8 & 20.1 $\pm$ 4.8 \\
    \midrule
    \multicolumn{9}{l}{\textit{Ours (no pretraining)}} \\
    MST             & \textbf{49.6 $\pm$ 12.2} & \textbf{65.7 $\pm$ 8.7} & \textbf{19.7 $\pm$ 7.2} & \textbf{37.0 $\pm$ 6.7} & \textbf{18.3 $\pm$ 5.6} & \textbf{31.1 $\pm$ 5.1} & \textbf{14.7 $\pm$ 4.7} & \textbf{23.7 $\pm$ 4.5} \\ \bottomrule
\end{tabular}} 
\end{table}


\end{document}